\newtheorem{lemma}{Lemma}
\newacronym{bc}{BC}{Behavioural Cloning}
\newacronym{irl}{IRL}{Inverse Reinforcement Learning}
\newacronym{lfd}{LfD}{Learning from Demonstration}
\newacronym{em}{EM}{Expectation Maximization}
\newacronym{promp}{ProMP}{Probabilistic Movement Primitives}
\newacronym{dmp}{DMP}{Dynamic Movement Primitives}
\newacronym{seds}{SEDS}{Stable Estimator of Dynamical Systems}
\newacronym{gmr}{GMR}{Gaussian Mixture Regressor}
\newacronym{gpr}{GPR}{Gaussian Process Regressor}
\newacronym{lwr}{LWR}{Locally Weighted Regressor}
\newacronym{kmp}{KMP}{Kernelized Movement Primitives}
\newacronym{clf}{CLF}{Control Lyapunov Function}
\newacronym{wsaqf}{WSAQF}{Weighted Sum of Asymmetric Quadratic Function)}
\newacronym{nilc}{NILC}{Neurally Imprinted Lyapunov
Candidate}
\newacronym{clfdm}{CLF-DM}{Control Lyapunov Function-based Dynamic Movements}
\newacronym{gcl}{GCL}{Guided Cost Learning}
\newacronym{mle}{MLE}{Maximun Likelihood Estimation}
\newacronym{sde}{SDE}{Stochastic Differential Equation}
\newacronym{ode}{ODE}{Ordinary Differential Equation}
\newacronym{probs}{ProbS}{Probabilistic Segmentation}
\newacronym{crf}{CRF}{Conditional Random Fields}
\newacronym{ppca}{PPCA}{Probabilistic Principal Component Analysis}
\newacronym{gmcc}{GMCC}{Generalized Multiple Correlation Coeficcient}
\newacronym{hri}{HRI}{Human-Robot Interaction}
\newacronym{ip}{IP}{Interaction Primitives}
\newacronym{hmm}{HMM}{Hidden Markov Model}
\newacronym{cac}{CAC}{Canonical Correlation Coefficient}
\newacronym{rv}{$R_v$}{$R_v$ Coefficient}
\newacronym{dcor}{dCor}{Distance Correlation}
\newacronym{dtw}{DTW}{Dynamic Time Warping}
\newacronym{edr}{EDR}{Edit Distance With Real Penalty}
\newacronym{twed}{TWED}{Time Warp Edit Distance}
\newacronym{r2}{$R^2$}{Coefficient of Determination}
\newacronym{sqp}{SQP}{Successive Quadratic Programming}
\newacronym{rkhs}{RKHS}{Reproducing Kernel Hilbert Space}
\newacronym{icnn}{ICNN}{Input-Convex Neural Network}
\newacronym{pca}{PCA}{Principal Component Analysis}
\newacronym{maf}{MAF}{Masked Autoregressive Flow}
\newacronym{iaf}{IAF}{Inverse Autoregressive Flow}
\def\1{\bm{1}}
\def\RR{\mathbb{R}}
\def\rmJ{{\mathbf{J}}}
\def\vtheta{{\bm{\theta}}}
\def\vg{{\bm{g}}}
\def\vp{{\bm{p}}}
\def\vy{{\bm{y}}}
\def\vz{{\bm{z}}}
\def\vB{{\bm{B}}}
\def\vlambda{{\boldsymbol{\lambda}}}
\def\vpsi{{\boldsymbol{\psi}}}
\def\vphi{{\boldsymbol{\phi}}}
\def\vtheta{{\boldsymbol{\theta}}}
\DeclareMathAlphabet{\mathsfit}{\encodingdefault}{\sfdefault}{m}{sl}
\SetMathAlphabet{\mathsfit}{bold}{\encodingdefault}{\sfdefault}{bx}{n}
\title{\LARGE \bf
ImitationFlow: Learning Deep Stable Stochastic Dynamic Systems by Normalizing Flows
}
\author{ 
Julen Urain$^1$, Michele Ginesi$^2$, Davide Tateo$^1$, and Jan Peters$^{1,3}$\thanks{
$^1$Intelligent Autonomous Systems, TU Darmstadt}
\thanks{$^2$Department of Computer Science, University of Verona}
\thanks{$^3$MPI for Intelligent Systems, Tuebingen}
\thanks{\tt \small \{urain,tateo,peters\}@ias.tu-darmstadt.de}
\thanks{\tt \small michele.ginesi@univr.it}
\thanks{This project has received funding from the European Union’s Horizon 2020 research and innovation programmes under grant agreement No. \#820807 (SHAREWORK), \#713010 (GOAL-Robots), and \#640554 (SKILLS4ROBOTS)}
}
\begin{document}

\maketitle
\thispagestyle{empty}
\pagestyle{empty}

\begin{abstract}
We introduce ImitationFlow, a novel Deep generative model that allows learning complex globally stable, stochastic, nonlinear dynamics. Our approach extends the Normalizing Flows framework to learn stable Stochastic Differential Equations. We prove the Lyapunov stability for a class of Stochastic Differential Equations and we propose a  learning algorithm to learn them from a set of demonstrated trajectories. Our model extends the set of stable dynamical systems that can be represented by state-of-the-art approaches, eliminates the Gaussian assumption on the demonstrations, and outperforms the previous algorithms in terms of representation accuracy. We show the effectiveness of our method with both standard datasets and a real robot experiment.
\end{abstract}

\section{INTRODUCTION}
To have fully autonomous robots in real-world scenarios, robots need to be able to perform a wide variety of complex tasks and skills.
However, programming a robot to perform complex motion is often a time-consuming task, which requires both expertise and domain knowledge. Imitation Learning tackles this problem and provides robotics non-expert users the capability to teach robots complex trajectories providing a few demonstrations \cite{schaal1997learning,atkeson1997locally}.

Given a set of demonstrations, the objective of Imitation learning is to find a generative model that produces trajectories similar to the demonstrated ones. A correct selection of the model will lead to a successful and safe imitation. These generative models are called Movement Primitives.
Movement primitives can be divided into three different categories: time-dependant, state-dependant, and time-state dependant movement primitives. 

In this work, we focus on state dependant motion primitives as they are robust to both spatial and temporal perturbation. This class of motion primitives is complementary to the time-dependent motion primitives approaches, such as the \gls{dmp}~\cite{schaal2006dynamic} and the \gls{promp}~\cite{paraschos2013probabilistic} approaches, that are particularly well suited when the movement presents a clear temporal (or phase) dependency, while robustness to perturbation is not a major concern.

\begin{figure}[t]
    \centering
    \includegraphics[width=.35\textwidth]{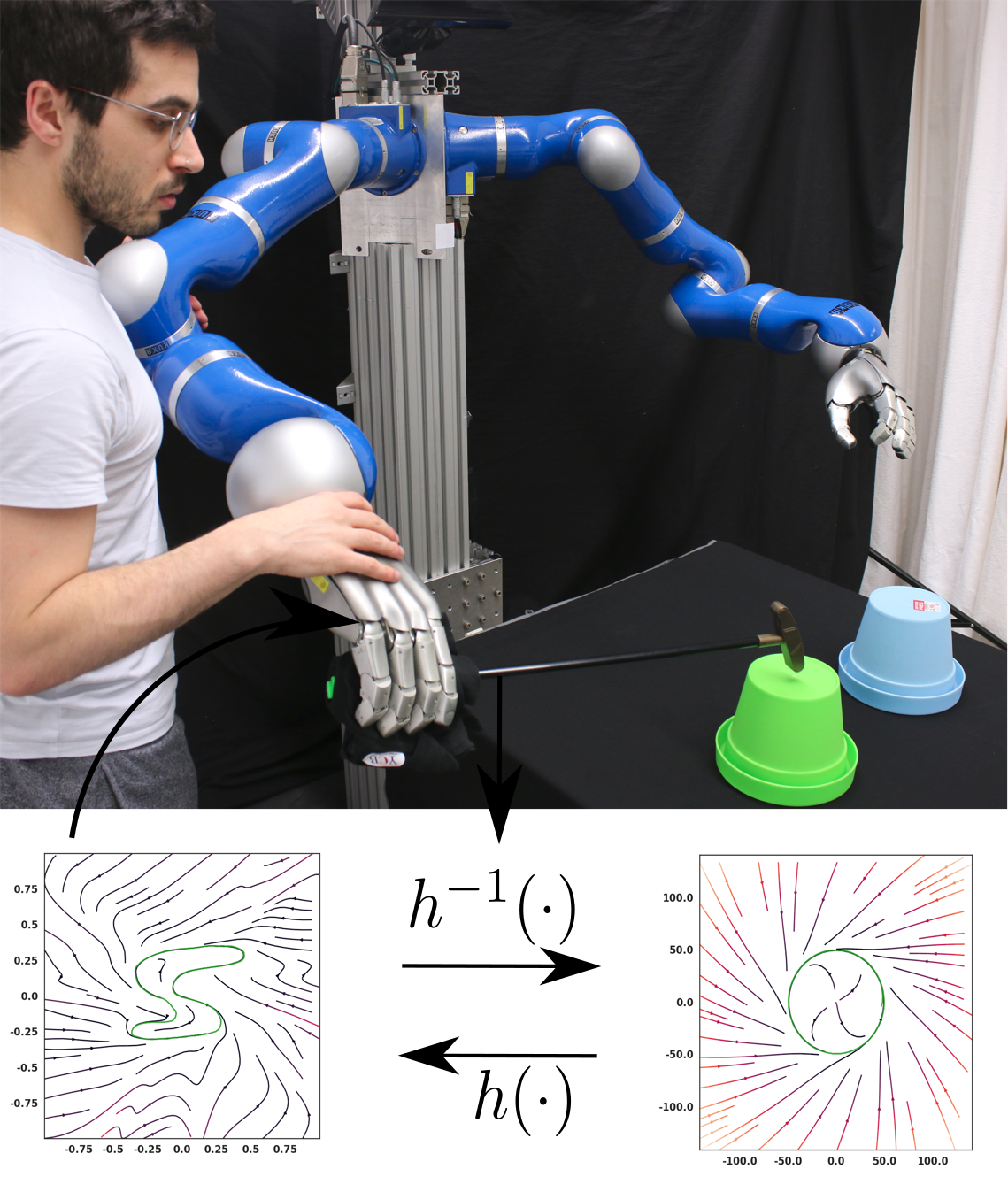}
    \caption{Above, robot is taught to perform a drums playing task. Below, \textit{ImitationFlows} receives the recorded trajectories and learns to morph the latent dynamics in the left to the dynamics in the right. The generated dynamics in the right are given back to the robot to perform the task.}
    \label{fig:real_task}
\end{figure}

One of the main issues of state-dependent movement primitives is to learn stable dynamics: While several models exist \cite{schaal2002scalable,hersch2008dynamical,huang2019kernelized}; most of them are not good ensuring stable dynamics out of the region of the demonstrated trajectories.
This issue limits the applications of these methods to real-world scenarios, as it can be difficult to ensure the correct execution of the learned motion.
This issue has been faced by the \gls{seds} algorithm~\cite{khansari2011learning}. In this model, the global asymptotically stability is ensured by a quadratic Lyapunov function. Due to the proposed Lyapunov function, the learned dynamics are restricted to continuously decreasing distance towards the attractor. In order to overcome the limitation of \gls{seds}, different approaches were proposed~\cite{lemme2013neurally, khansari2014learning,ravichandar2015learning,neumann2015learning, perrin2016fast,ravichandar2017learning,sindhwani2018learning,Kolter2019Learning}. However, except for \cite{umlauft2017learning}, all these models consider deterministic models. Also, all these works consider only point-to-point dynamics.

\textbf{Contributions} In this paper, we present \textit{ImitationFlow}, a novel approach for learning stable stochastic nonlinear dynamical systems.
Our methodology merges Deep generative models like Normalizing Flows \cite{rezende2015variational, dinh2016density, papamakarios2017masked} with stable dynamical systems modeling. 

Our approach not only is capable of representing a wider class of dynamical systems w.r.t. previous works in the field, but it can also represent arbitrary complex densities, removing the Gaussian noise assumption of the demonstrated trajectories.
The proposed model can describe both strike-based and periodic movements in a single framework, without changing the core learning algorithm.

We also formally prove that \textit{ImitationFlow} can learn globally asymptotically stable stochastic dynamics. This property ensures global convergence to the goal for point-to-point motions, ensuring that the learned movement can be applied from any starting point of the workspace. We have released a Pytorch Implementation at \href{https://github.com/TheCamusean/iflow}{https://github.com/TheCamusean/iflow}

\section{Background \& Notation}

\subsection{Problem Statement}\label{sec:prob_state}
Let $\mathcal{T} = \{ \tau_1, \tau_2, \dots , \tau_{n-1}, \tau_n \}$ be a set of $n$ expert demonstrated trajectories, where each trajectory $\tau_i = \{ \vy_{1}, \vy_{2} , \dots, \vy_{T_i} \}$ of length $l_i$ is a sequence of observations $\vy_i \in \RR^{d}$.
For clarity of presentation, we assume that each element of the trajectory is generated with a fixed sampling time $\Delta T$, but our derivations can be extended to the variable sampling time scenario.
 
Let $p(y_0)$ be the distribution of the trajectory starting point. We assume that each trajectory element is generated by the following \gls{sde}:
\begin{align}\label{eq:sde}
    d\vz(t) = f(\vz(t))dt + \vg(\vz(t) ) d \vB(t),
\end{align}
where $\vz \in \RR^d$ is the state, $ f : \RR^d \to \RR^d $ is a continuous function, $ \vg : \RR^d \to \RR^{d\times d}  $ is a continuous matrix function, and $ \vB : \RR \to \RR^d $ is a $d-$dimensional \emph{Brownian motion} (also called \emph{Wiener process}).\\
We remark that \eqref{eq:sde} is written in autonomous form. The general case, in which $f$ and $\vg$ are functions of both the state and time can be easily recovered by defining $ \tilde{ \vz } = [\vz ^{\intercal}, t]^{\intercal} $, $ \tilde{f}(\tilde{\vz}) = [f(\vz(t))^\intercal, 1]^{\intercal} $.

Our problem is to maximize the likelihood of the unknown model parameters $\vpsi = \{ \vtheta, \vphi \}$ w.r.t. the set of observed trajectories $\mathcal{T}$.
We frame our learning problem as the following optimization problem
\begin{align}
    \vpsi^{*} = & \arg \max_{\vpsi} \mathcal{L}_{\vpsi}\left(\mathcal{T}\right) \nonumber\\
    s.t. &\lim_{t\to\infty} \mathbb{E}\left[\vy_t\right] \in \Omega,\, \forall y_0 \in \RR^{d}
    \label{eq:likelihood_optim}
\end{align}
where $\Omega\subset \RR^{d}$ and $\mathcal{L}_{\vpsi}$ is defined as
\begin{equation*}
     \mathcal{L}_{\vpsi}\left(\mathcal{T}\right) = \prod_{i=1}^{n} p(\tau_{i};\vpsi) = \prod_{i=1}^{n}p(y_0)\prod_{j=0}^{l_i} p(y_{j+1}|y_j).
\end{equation*}

\subsection{Stability for Stochastic Dynamical systems}\label{sec:stoLyapuSta}

Lyapunov stability has various definitions in the case of stochastic dynamical systems.
One option is to study the stability in probability, in which the stability is studied replacing $\dot{V}$ by its expected value~\cite{mao2007stochastic}.

By It\^{o}'s formula, the time derivative of the Lyapunov function,  $V(\vy, t)$, is expressed by the following differential equation
\begin{align}
    dV(\vy,t) = LV(\vy,t) dt + V_{\vy}(\vy,t) \vg(\vy , t) d \vB(t) \nonumber\\
    LV(\vy,t) = V_t + V_{\vy} f(\vy,t) + \frac{1}{2} \textrm{Tr}(\vg^{\intercal}(\vy , t)) V_{\vy\vy}\vg(\vy , t),
    \label{eq:sto_derivative}
\end{align}
where $LV(\vy,t)$, is known as the diffusion equation. 
The expected value of $\dot{V}$ is
\begin{align}
    \mathbb{E}\left[\dot{V}(\vy,t)\right] = LV(\vy,t).
\end{align}
Assume there exist a $V(\cdot,\cdot) : \RR^{d} \times \RR \xrightarrow{} \RR$, $ (\vy, t)\mapsto V(\vy, t) $, and strictly increasing functions $\mu_1, \mu_2, \mu_3$ such that
\begin{subequations}
\begin{align}
    \mu_1(|\vy|) &\leq V(\vy,t) \leq \mu_2(|\vy|), \label{eq:stLyapu1} \\
    LV(\vy,t) &\leq - \mu_3(|\vy|) \, \forall \vy \in \RR^{d}  \label{eq:stLyapu2}.
\end{align}
\end{subequations}
Then, the trivial solution for our \gls{sde} is stochastically asymptotically stable~\cite{mao2007stochastic}.

\subsection{Normalizing Flows}
Normalizing Flows provide a method for expressive density approximation.  ~\cite{rezende2015variational}. Requiring only a base distribution, usually uniform or normal distribution, and a set of bijective transformations, Normalizing Flows allows explicit density estimation and, in most cases, to sample from these complex distributions ~\cite{papamakarios2017masked,dinh2016density}. 

Given a latent variable $\vz \in \RR^{d}$, sampled from a certain distribution $z \sim p_z(\vz)$ and a diffeomorphic transformation $h:\; \RR^{d} \to  \RR^{d}$, such that $\vy = h(\vz)$, we can compute the distribution of $\vy$, $p_y(\vy)$, in terms of $\vz$, by the change of variable rule.
\begin{align}
    p_y(\vy) = p_z(\vz) \left|\det \frac{\partial \vz}{\partial \vy} \right| = p_z(h^{-1}(\vy)) \left|\det \frac{\partial h^{-1}(\vy)}{\partial \vy} \right|.
\end{align}

\section{PROPOSED METHOD}\label{ImitationFlow}

\begin{figure}[t]
  \centering
  \resizebox{0.8\columnwidth}{!}{
    \begin{tikzpicture}[scale=0.75]
        \node[draw, circle] (z0) at (0,0) {$ Z_0 $};
        \node[draw, circle] (z1) at (2,0) {$ Z_1 $};
        \node[draw, circle] (zn) at (5,0) {$ Z_n $};
        \node[fill = gray, circle, opacity=0.3] (y0) at (0,-2) {$ Y_0 $};
        \node[fill = gray, circle, opacity=0.3] (y1) at (2,-2) {$ Y_1 $};
        \node[fill = gray, circle, opacity=0.3] (yn) at (5,-2) {$ Y_n $};
        \node[draw, circle] (y0) at (0,-2) {$ Y_0 $};
        \node[draw, circle] (y1) at (2,-2) {$ Y_1 $};
        \node[draw, circle] (yn) at (5,-2) {$ Y_n $};
        \draw[-{Latex}] (z0) -- (z1);
        \draw[-{Latex}, dashed] (z1) -- (zn);
        \draw[-{Latex}] (z0) -- (y0);
        \draw[-{Latex}] (z1) -- (y1);
        \draw[-{Latex}] (zn) -- (yn);
        
        \draw (-0.75,0.75) -- (5.75,0.75) -- (5.75, -2.75) -- (-0.75,-2.75) -- cycle;
        
        \node[draw, circle] (phi) at (-1.5, 1) {$ \vphi $};
        \node[draw, circle] (theta) at (-1.5, -1) {$ \vtheta $};
        
        \draw[-{Latex}] (phi) to (z0);
        \draw[-{Latex}] (phi) to [out = 0, in = 150] (z1);
        \draw[-{Latex}] (phi) to [out = 0, in = 160] (zn);
        
        \draw[-{Latex}] (theta) to (y0);
        \draw[-{Latex}] (theta) to [out = 0, in = 150] (y1);
        \draw[-{Latex}] (theta) to [out = 0, in = 160] (yn);
    \end{tikzpicture}
  }
  \caption{Imitation Flow architecture as a graphical model}
  \label{im:ImitationFlow}
\end{figure}
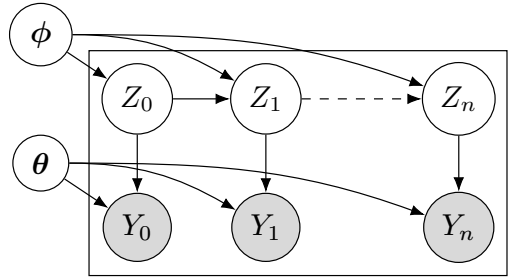
\textit{ImitationFlow} model extends the concept of Structured Inference Networks for Nonlinear State Space Models~\cite{krishnan2017structured} to Normalizing Flows. Considering Normalizing Flows as emission function allows exact inference at the cost of imposing a deterministic emission.

The proposed model's architecture is presented in Fig.~\ref{im:ImitationFlow}, and the dynamics are modelled using \eqref{eq:ImitFlow}.
Our model is composed of two main components. In the latent space $\mathcal{Z}$, the transition model follows some stable stochastic dynamics parameterized by $\vphi$. 
Then, we use, as emission function, a bijective, continuous and differentiable transformation $h_{\vtheta}\,: \RR^{d}\xrightarrow{}\RR^{d}$ transforming the data from the latent space $\mathcal{Z}$ to the observation space $\mathcal{Y}$. 
\begin{align}
    & d\vz(t) = f_{\vphi}(\vz)dt + \vg_{\vphi}(\vz) d \vB(t) \nonumber\\
    & \vy = h_{\vtheta}(\vz),
\label{eq:ImitFlow}
\end{align}
where $\vtheta$ and $\vphi$ are the learnable parameters of the model.
Given that the Jacobian of $h_{\vtheta}$, $J_{\vtheta} = \frac{d \vy}{d \vz}$, is easy to compute, we can reframe \eqref{eq:ImitFlow} to compute the stochastic dynamic model for $\vy(t)$
\begin{align}
    d \vy(t) = J_{\vtheta}(\vy)f_{\vphi}(h_{\vtheta}^{-1}(\vy))dt + J_{\vtheta}(\vy)\vg_{\vphi}(h_{\vtheta}^{-1}(\vy))d \vB(t)
    \label{eq:observ_dyn}
\end{align}

\subsection{Learning Algorithm}\label{sec:LearningAlgo}




In this section, we describe how we solve the optimization problem described in \eqref{eq:likelihood_optim}.
In order to estimate correctly the likelihood of the trajectories, we should estimate the probability density of initial states $p(\vy_0)$. However, if a few trajectories are available, this estimation can be problematic.
We leverage on the fact that the system represents stable dynamics: this means that we will have a stationary distribution in the limit
\begin{align*}
    \lim_{t \xrightarrow{} \infty} p(\vy_t) = p(\vy_\infty).
\end{align*}
We exploit this property by assuming that the distribution of the last point of the trajectory is the stationary one.
Differently from most of others probabilistic estimation algorithms, where the structural density is considered with forward conditioning, $p(\vy_{j+1}| \vy_{j})$, in our approach we consider a backward conditioning with $p(\vy_{j}| \vy_{j+1})$. It is straightforward to prove that this view is equivalent under Bayes's rule.

Given the model proposed in \eqref{eq:ImitFlow}, we can rewrite the probability distributions $p(\vy)$ in terms of $p(\vz)$. By applying the change of variable rule we obtain
\begin{align}
    p(\vy_n) & = p(\vz_n) \left| \det \frac{\partial \vz_n}{\partial \vy_n} \right| = p(f^{-1}(\vy_n)) \left| \det J^{-1}(\vy_n)\right|.
    \label{eq:density_1}
\end{align}
Due to the fact that $\vy_{i+1}$ and $\vz_{i+1}$ are the same event, it follows that $p(\vy_i|\vy_{i+1}) = p(\vy_i|\vz_{i+1})$. Applying again the change of variable rule we obtain
\begin{align}
    p(\vy_i|\vy_{i+1}) & = p(\vz_i|\vz_{i+1}) \left| \det \frac{\partial \vz_i}{\partial \vy_i} \right| \nonumber\\
    & = p(f^{-1}(\vy_i)|f^{-1}(\vy_{i+1})) \left| \det J^{-1}(\vy_i) \right|. \label{eq:density_2}
\end{align}

To optimize \eqref{eq:density_2} we first select a simple stochastic dynamical system for the latent dynamics. We choose a parameterization such that the asymptotic behavior of the system e.g., stable equilibrium point or limit cycle, is enforced. Then we select a suitable class of normalizing flows as emission function $h$.
In this work, we will use the Euler-Maruyama~\cite{platen2010numerical} integration scheme to integrate the \gls{sde} that describes the dynamics of the latent space.

The resulting  algorithm is summarized in Alg.~\ref{alg:learning}. On each iteration,  a random trajectory $\tau_\vy$ and a sampling time $\Delta T$ are selected. By the inverse of the bijective transformation $h^{-1}(\cdot)$ the latent trajectory $\tau_\vz$ is computed. The determinant of the inverse jacobian $|J|^{-1}$ is also computed as we require it for computing the probability in (\ref{eq:density_1}) and (\ref{eq:density_2}). The latent trajectory $\tau_\vz$ is split to be able to compute the conditional probability $p(\vz_i|\vz_{i+\Delta T})$ and the probability in the end $p(\vz_n)$.

\begin{algorithm}
\caption{ImitationFlow Learning}
\begin{algorithmic}
    \STATE $\textrm{Input: } \mathcal{T} \textrm{ trajectories }$
\STATE $\textrm{Parameters: } \vphi \textrm{ dynamics, } \vtheta \textrm{ NormalizingFlow}$
\WHILE{$\textrm{not converged}$}
\STATE $\tau_\vy \leftarrow \{ \mathcal{T} \}$
\STATE $\Delta T \leftarrow \{\textrm{Get a sampling time}\}$
\STATE $\tau_\vz , |\rmJ_{\tau_\vz}|^{-1} \leftarrow h^{-1}_{\vtheta}(\tau_\vy)$
\STATE $\vz_{(0:T-\Delta T)} , \vz_{(\Delta T:T)} , \vz_n \leftarrow \textrm{SplitTime}(\tau_\vz, \Delta T)$
\STATE $\vp(\cdot|\vz_{i+\Delta T};\vphi) , p_n(\cdot;\vphi) \leftarrow \textrm{GetDensFunc}(\vz_{(\Delta T:T)} , \vz_n)$
\STATE $\mathcal{L} = p_n(\vz_n;\vphi)|J_n|^{-1} \prod p(\vz_{i} |\vz_{i+\Delta T};\vphi)|J_i|^{-1}$
\STATE $\Delta \vtheta, \Delta \vphi \propto - \nabla\vtheta \mathcal{L}, - \nabla\vphi \mathcal{L} $
\ENDWHILE
\end{algorithmic}
\label{alg:learning}
\end{algorithm}

\subsection{Latent Stochastic Linear Dynamics} \label{sec:lat_linear}
A simple stochastic stable dynamic system is the stochastic Linear dynamics
\begin{align}
    d\vz(t) = A_{\vphi} \vz(t) dt + K_{\vphi} d \vB(t) 
\end{align}
where $A_{\vphi}$ and $K_{\vphi}$ are the learnable parameters of the latent dynamics. We impose stability by constraining the eigenvalues real part to be smaller than zero, $\mathcal{R}(\vlambda_A)<0$.

Besides, we require to compute the stationary distribution of $\vz$. To simplify the problem, we consider the stationary distribution of the discretized system. Let $F=A\Delta T+I$ and $\Sigma = KK^T\Delta T$, then, we know that
\begin{align*}
    \lim_{t \xrightarrow{}\infty} p(\vz_t) = \mathcal{N}(\boldsymbol{0},\Sigma_\infty), & &
    \Sigma_\infty = \sum_{i=0}^{\infty}F^{i} \Sigma F^{i^{\intercal}},
\end{align*}
where $\Sigma_{\infty}$ can be computed in closed form in the vectorized form as
\begin{equation}
    \textrm{vec}(\Sigma_{\infty}) = \textrm{vec}(\Sigma) (I_{n^{2}} - F \otimes F).
    \label{eq:sigma_inf}
\end{equation}

To learn the model it is also required to compute the backward conditional probability $p(\vz_i| \vz_{i-1})$. Linear stochastic dynamics are invertible. Therefore, we can compute the conditional backward probability, which is normally distributed
\begin{equation*}
    p(\vz_i| \vz_{i+1}) = \mathcal{N}(\vz_i| F^{-1}\vz_{i+1}, F^{-1}\Sigma F^{-\intercal}).
\end{equation*}

\subsection{Stability analysis for latent stable dynamics}
In the following, we prove the asymptotic stability in probability of the learned system, under the assumption of a stable latent dynamic.
\begin{lemma}
For any diffeomorphic transformation $\vy(t) = h (\vz(t))$, if the dynamics of $\vz(t)$ are stochastically asymptotically stable, then the dynamics of $\vy(t)$ are also stochastically asymptotically stable.
\end{lemma}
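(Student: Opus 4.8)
The plan is to push the latent Lyapunov function forward through $h$ into the observation space. Since $\vz(t)$ is stochastically asymptotically stable, let $V(\vz,t)$ together with strictly increasing functions $\mu_1,\mu_2,\mu_3$ (vanishing at $0$) be the associated Lyapunov data satisfying \eqref{eq:stLyapu1}--\eqref{eq:stLyapu2} for the latent \gls{sde}; for the concrete latent systems used here, such as the linear one of \secref{sec:lat_linear}, $V$ is explicitly a quadratic form $\vz^{\intercal}P\vz$, so no converse Lyapunov theorem is needed. Assume, translating coordinates if necessary, that the latent equilibrium is the origin and $h(\vzero)=\vzero$, so that the equilibrium of the $\vy$-dynamics \eqref{eq:observ_dyn} is also the origin. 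I would then take as candidate
\begin{equation*}
    W(\vy,t) := V\big(h^{-1}(\vy),t\big)
\end{equation*}
and show that it certifies stochastic asymptotic stability of \eqref{eq:observ_dyn}.

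The key step is identifying the diffusion operator $L_\vy W$ of the $\vy$-process, and the clean way to do this is by uniqueness of the semimartingale decomposition. Because $\vy(t)=h(\vz(t))$, the two processes $W(\vy(t),t)$ and $V(\vz(t),t)$ are pathwise identical, so their finite-variation parts must coincide. By It\^{o}'s formula \eqref{eq:sto_derivative} the finite-variation part of $V(\vz(t),t)$ is $\int_0^t L_\vz V(\vz(s),s)\,ds$, while that of $W(\vy(t),t)$ is $\int_0^t L_\vy W(\vy(s),s)\,ds$; equating them, differentiating, and using continuity together with the arbitrariness of the initial point yields the pointwise identity $L_\vy W(\vy,t)=L_\vz V\big(h^{-1}(\vy),t\big)$. (Equivalently one can verify this by a direct chain-rule computation from the drift and diffusion coefficients of \eqref{eq:observ_dyn}, but that route is tedious.) Combined with \eqref{eq:stLyapu2} this gives $L_\vy W(\vy,t)\le-\mu_3\big(|h^{-1}(\vy)|\big)$, and \eqref{eq:stLyapu1} gives $\mu_1(|h^{-1}(\vy)|)\le W(\vy,t)\le\mu_2(|h^{-1}(\vy)|)$.

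What remains is to re-express the bounds in terms of $|\vy|$ rather than $|h^{-1}(\vy)|$. A diffeomorphism of $\RR^d$ is a homeomorphism, hence proper (the preimage of a compact set is its image under the continuous map $h^{-1}$, so compact); combined with $h(\vzero)=\vzero$ this produces strictly increasing functions $\gamma_1,\gamma_2$ with $\gamma_i(0)=0$, $\gamma_i(r)\to\infty$, and $\gamma_1(|\vy|)\le|h^{-1}(\vy)|\le\gamma_2(|\vy|)$. Setting $\mu_1'=\mu_1\circ\gamma_1$, $\mu_2'=\mu_2\circ\gamma_2$, $\mu_3'=\mu_3\circ\gamma_1$ — again strictly increasing and vanishing at $0$ — we obtain $\mu_1'(|\vy|)\le W(\vy,t)\le\mu_2'(|\vy|)$ and $L_\vy W(\vy,t)\le-\mu_3'(|\vy|)$, so by the criterion recalled in \secref{sec:stoLyapuSta} the $\vy$-dynamics are stochastically asymptotically stable.

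I expect the only real friction to be in that last paragraph: building $\gamma_1,\gamma_2$ rigorously (the naive choice $r\mapsto\min_{|\vy|=r}|h^{-1}(\vy)|$ need not be monotone, so one passes to running minima/maxima, plus a standard monotonization to get strict increase), and being careful about whether these bounds are needed globally — yielding global asymptotic stability — or only on a neighborhood of the equilibrium. The conceptual heart of the lemma, namely that the generator transforms covariantly under the diffeomorphism so that the sign of $LV$ is preserved, is essentially immediate once phrased through semimartingale uniqueness and requires no assumptions beyond those already granted.
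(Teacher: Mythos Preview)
Your proposal is correct and shares the paper's core idea: define the observation-space candidate as the pushforward $W(\vy)=U(h^{-1}(\vy))$ of the latent Lyapunov function, then show its diffusion operator coincides with that of $U$. The executions differ in two respects. First, to obtain $L_\vy W = L_\vz U \circ h^{-1}$, the paper carries out precisely the direct chain-rule computation you dismiss as ``tedious'': it expands $V_\vy$ and $V_{\vy\vy}$ in terms of $U_\vz$, $U_{\vz\vz}$ and the Jacobian $J^{-1}$, substitutes into the generator of \eqref{eq:observ_dyn}, and observes that the $J$ factors cancel. Your route via the pathwise identity $W(\vy(t),t)=U(\vz(t),t)$ and uniqueness of the semimartingale (finite-variation) part is shorter and sidesteps the delicate bookkeeping of second-order terms---in particular it is robust to whether or not the It\^{o} correction in the drift of \eqref{eq:observ_dyn} and the product-rule term in $V_{\vy\vy}$ have been tracked consistently. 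Second, your final paragraph on converting the bounds from $|h^{-1}(\vy)|$ to $|\vy|$ via comparison functions $\gamma_1,\gamma_2$ built from properness of $h$ addresses a step the paper does not spell out; there \eqref{eq:stLyapu1} is asserted to carry over directly. Your treatment is the more careful one, and your anticipated ``friction'' about monotonizing $r\mapsto\min_{|\vy|=r}|h^{-1}(\vy)|$ is exactly the small technicality that has to be handled for a clean global statement.
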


\begin{proof}
We follow a derivation similar to the one proposed in \cite{neumann2015learning} for Lyapunov stability analysis in \gls{ode}.
Let $U(\cdot)$ be a Lyapunov function for the latent dynamics.
We define the following Lyapunov candidate for the observed dynamics:
\begin{equation*}
    V(\vy) = U(h^{-1}(\vy)).
\end{equation*}
From this definition it follows that
\begin{equation}
    U(\vz) = V(h(\vz)).
    \label{eq:cond1}
\end{equation}
From the equality in \eqref{eq:cond1} and by the fact that U is a valid Lyapunov candidate we get that the condition in \eqref{eq:stLyapu1} is also satisfied.

To satisfy the condition in \eqref{eq:stLyapu2}, we compute the diffusion \eqref{eq:sto_derivative} of the Lyapunov function $LV$. Considering \eqref{eq:sto_derivative} and \eqref{eq:observ_dyn}
\begin{align}
       LV(\vy,t) = & V_t + V_{\vy} J(\vy)f(h^{-1}(\vy)) + \nonumber \\ 
       & \frac{1}{2} \textrm{Tr}((J(\vy)\vg(h^{-1}(\vy)))^{\intercal} V_{\vy\vy} (J(\vy)\vg(h^{-1}(\vy)))).
       \label{eq:lv}
\end{align}
Moreover, we can rewrite $V_t$, $V_{\vy}$ and $V_{\vy\vy}$ in terms of $U_{\vz}$ and $U_{\vz\vz}$.
\begin{align}
     V_t(\vy) = \frac{\partial}{\partial t} V(\vy) = 0,
     \label{eq:vt}
\end{align}
as we don't have explicit time dependency on $V(\vy)$.
\begin{align}
    V_{\vy}(\vy) = \frac{\partial}{\partial \vy} V(\vy) = \frac{\partial}{\partial \vz} U(\vz) \frac{\partial \vz}{\partial \vy} = U_{\vz}(\vz)J^{-1}(\vy),
    \label{eq:vy}
\end{align}
where $U_{\vz}(\cdot) \,:\, \RR^{d} \xrightarrow{} \RR^{1 \times d}$. Finally, we can rewrite $V_{\vy\vy}$
\begin{align}
    V_{\vy\vy}(\vy) &= \frac{\partial^{2}}{\partial \vy^{2}}V(y) = \frac{\partial}{\partial \vy^{\intercal}} \Big( \frac{\partial}{\partial \vy} V(\vy) \Big) = \nonumber \\
    &= \frac{\partial}{\partial \vy^{\intercal}} \Big(U_{\vz}(\vz)J^{-1}(\vy) \Big) 
        \label{eq:vyy} \nonumber\\
    &= \frac{\partial \vz^{\intercal}}{\partial \vy^{\intercal}}\frac{\partial}{\partial \vz^{\intercal}}\Big(U_{\vz}(\vz)J^{-1}(\vy) \Big) \nonumber\\
    & = J^{-\intercal}(\vy) U_{\vz\vz}(\vz) J^{-1}(\vy).
\end{align}
Introducing \eqref{eq:vt}, \eqref{eq:vy} and \eqref{eq:vyy} in the diffusion equation~\eqref{eq:lv}
\begin{align}
    LV(\vy) = &  U_{\vz} f(\vz) +\frac{1}{2} \textrm{Tr}(\vg^{\intercal}(\vz) U_{\vz\vz} \vg(\vz)) = LU(\vz).
    \label{eq:lu}
\end{align}
By hypothesis $LU(\vz)$ satisfies the condition in \eqref{eq:stLyapu2}, therefore the condition is also satisfied by $LV(\vy)$.
\end{proof}

\subsection{Latent Stochastic Limit Cycles}\label{sec:lat_limit}
In two dimensional space, attractive limit cycles can be represented by Linear Dynamics in polar coordinates $\{\rho, \psi \}$, and then apply the change of Variable rule to move to cartesian coordinates. Given $\rho \in \RR$, the radius and $\psi \in \{- \pi, \pi \}$, the angle as the latent state, the dynamics can be represented by stochastic linear dynamics
\begin{align}
    d \rho(t) &= a_{\vphi} (\rho(t) - \rho^{*}_{\vphi})dt  + \sigma_{1\vphi} dB(t) &\\
    d \psi(t) &= b_{\vphi}dt  + \sigma_{2\vphi} dB(t),
\end{align}
where $\{a_{\vphi}, b_{\vphi}, \rho^{*}_{\vphi} \}$ are learnable linear dynamics parameters and $\{\sigma_{1\vphi}, \sigma_{2\vphi}\}$ are learnable standard deviation parameters.
The state is transformed from polar coordinates to cartesian coordinates and back with a diffeomorphic transformation, so it allows us to apply the change of variable rule
\begin{align*}
    x = & \rho \cos (\psi) & & & \rho = & \sqrt{x^{2} + y^{2}} \\
    y = & \rho \sin (\psi) & & & \psi = & \arctan \left( \frac{y}{x} \right).
\end{align*}
Similarly to section \ref{sec:lat_linear}, in order to compute the loss, we need the stationary distribution $p(x_{\infty},y_{\infty})$. Applying the change of Variable rule, we obtain
\begin{equation*}
    p(x_{\infty},y_{\infty}) = p(\rho_{\infty},\psi_{\infty})\left|\frac{\partial (\rho_{\infty},\psi_{\infty})}{\partial (x_{\infty},y_{\infty})}\right|,
\end{equation*}
where the probability is split between the determinant of the Jacobian for the polar coordinates and the probability of the polar coordinates in $\infty$. The determinant of the Jacobian is
\begin{align}
    \det J = \det \begin{bmatrix}
\frac{x}{\sqrt{x^{2} + y^{2}}} & \frac{-y}{x^{2} + y^{2}}\\
\frac{y}{\sqrt{x^{2} + y^{2}}} & \frac{x}{x^{2} + y^{2}}
\end{bmatrix} = (x^{2} + y^{2})^{-\frac{1}{2}}.
\label{eq:jacobian_limit}
\end{align}
The density of the stationary distribution is the product of two separate distribution families. While, $\rho$ evolves with a linear dynamic towards $\rho^{*}$, leading to a normal distribution, the stationary distribution for $\psi$ depends on the initial density, $p(\psi_0)$. Unfortunately, in full generality, the density at the limit is not uniquely given, as it depends on the initial density distribution. For instance, if we assume that the initial phase distribution is uniform, the stationary density will be
\begin{equation*}
    p(\rho_{\infty},\psi_{\infty}) = p(\rho_{\infty})p(\psi_{\infty}) = \mathcal{N}(\rho^{*},\Sigma_{\infty})\mathcal{U}(-\pi,\pi).
\end{equation*}
where $\Sigma_{\infty}$ is computed the same way we did in \eqref{eq:sigma_inf}. We compute the conditional probability as in the previous section.

For higher dimensions, we propose to maintain the limit cycle in a two-dimensional plane and add further linear attractor dynamics towards zero for the extra coordinates.

\section{Experimental Evaluation}\label{exp}
In this section, we evaluate the learning quality of our model in different datasets. We compare \textit{ImitationFlow} with \gls{seds} and \gls{clfdm} in the LASA dataset \cite{khansari2011learning}. Then, we show that our model is capable of representing limit cycle dynamics, for both generation and discrimination of trajectories. Finally, we show that our algorithm scales to real-world problems by learning drums playing motion in a real robot.

\begin{figure*}
\centering
\begin{tabular}{ c  c c c }
    \includegraphics[width=.22\textwidth]{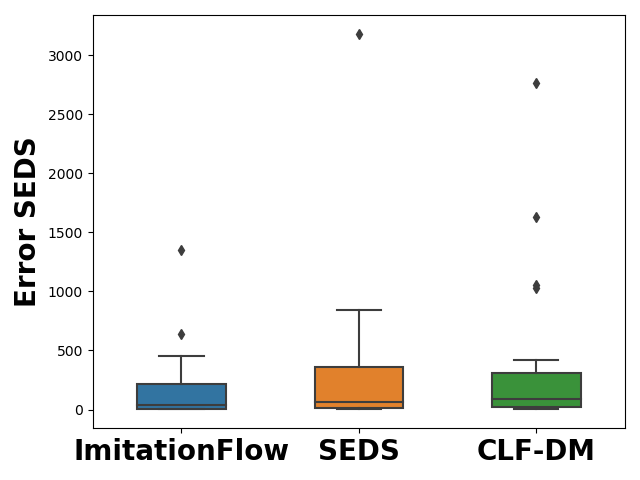} &
     \includegraphics[width=.22\textwidth]{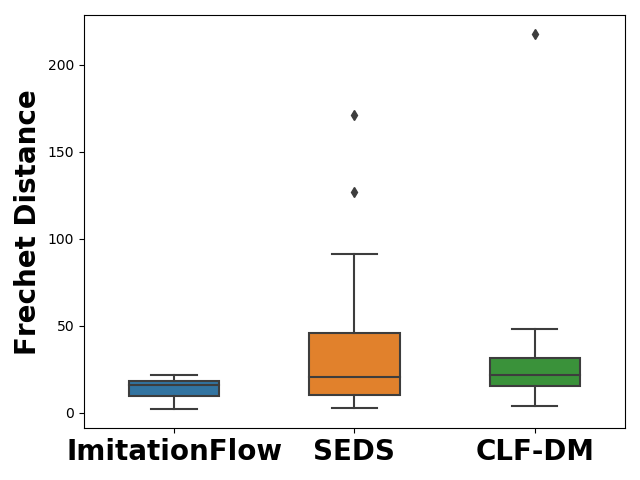} &
     \includegraphics[width=.22\textwidth]{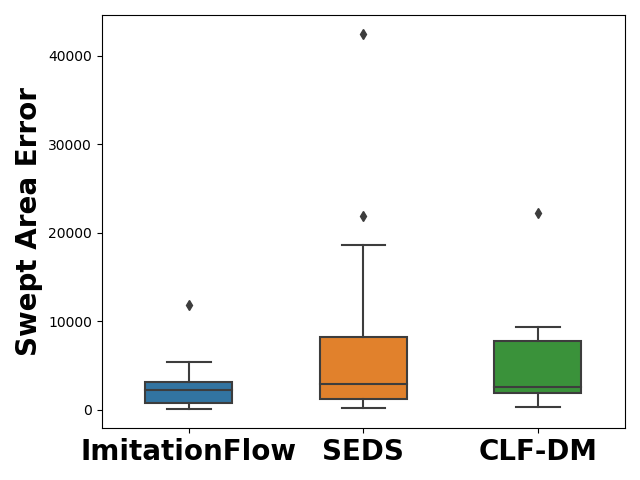} &
      \includegraphics[width=.22\textwidth]{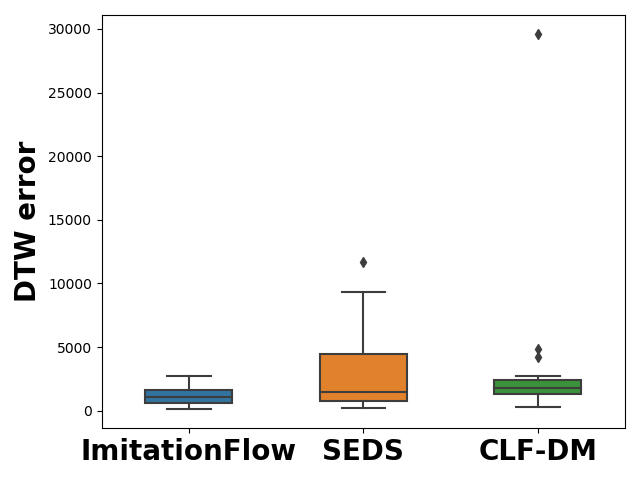}\\
      {\scriptsize (a) Error SEDS} &
      {\scriptsize (b) Frechet Distance} &
      {\scriptsize (c) Swept Area} &
      {\scriptsize (d) DTW distance}
  \end{tabular}
  \caption{Comparison of the performances of our method w.r.t. different state of the art algorithms. Boxplots are obtained using the LASA dataset.}
  \label{fig:metrics}
\end{figure*}

\subsection{Point-to-Point trajectories}\label{sec:lasa}

LASA dataset is a set of two-dimensional point-to-point trajectories with different shapes. This dataset is widely used for testing stable dynamics models. We learn the dynamics and we compare our results with baseline methods such as \gls{seds} and \gls{clfdm} in 4 metrics: \gls{seds} error \cite{khansari2011learning}, Swept Area Error \cite{khansari2014learning}, Frechet distance and DTW error \cite{Jekel2019}. 

As we deal with a small set of demonstrations, the mean velocity of the demonstrated trajectories is set as the initial velocity of the latent linear dynamics. With random initialization parameters, while the model can still match the trajectory closely, it struggles to learn the proper magnitudes of velocities.
For these experiments, we model the emission function as a concatenation of 10 coupling layers \cite{dinh2016density} and 10 orthogonal transformations \cite{kingma2018glow}.

To compute the metrics, we generate the expected trajectory from the same starting point of demonstrations and we compute the similarity w.r.t the demonstrations. We compare the obtained similarity measurements with the ones obtained for \gls{seds} and \gls{clfdm}.

Fig.~\ref{fig:metrics} shows that \textit{ImitationFlow} outperforms the other algorithms in all the metrics, providing more accurate results. Moreover, we can see that our model is more robust than \gls{seds} and \gls{clfdm}: it is clear that \textit{ImitationFlow} is the less variant and produces fewer outliers. 

The generated trajectories are shown in Fig.~\ref{fig:generated}. \textit{ImitationFlow} can learn stochastic dynamic systems that follow the trajectories provided by the user while maintaining stability. The vector field in Fig.~\ref{fig:generated} has been computed with the expected value and we can observe the variance in our model in the light blue trajectories. 

\begin{figure}[t]
    \centering
        \includegraphics[width=.22\textwidth]{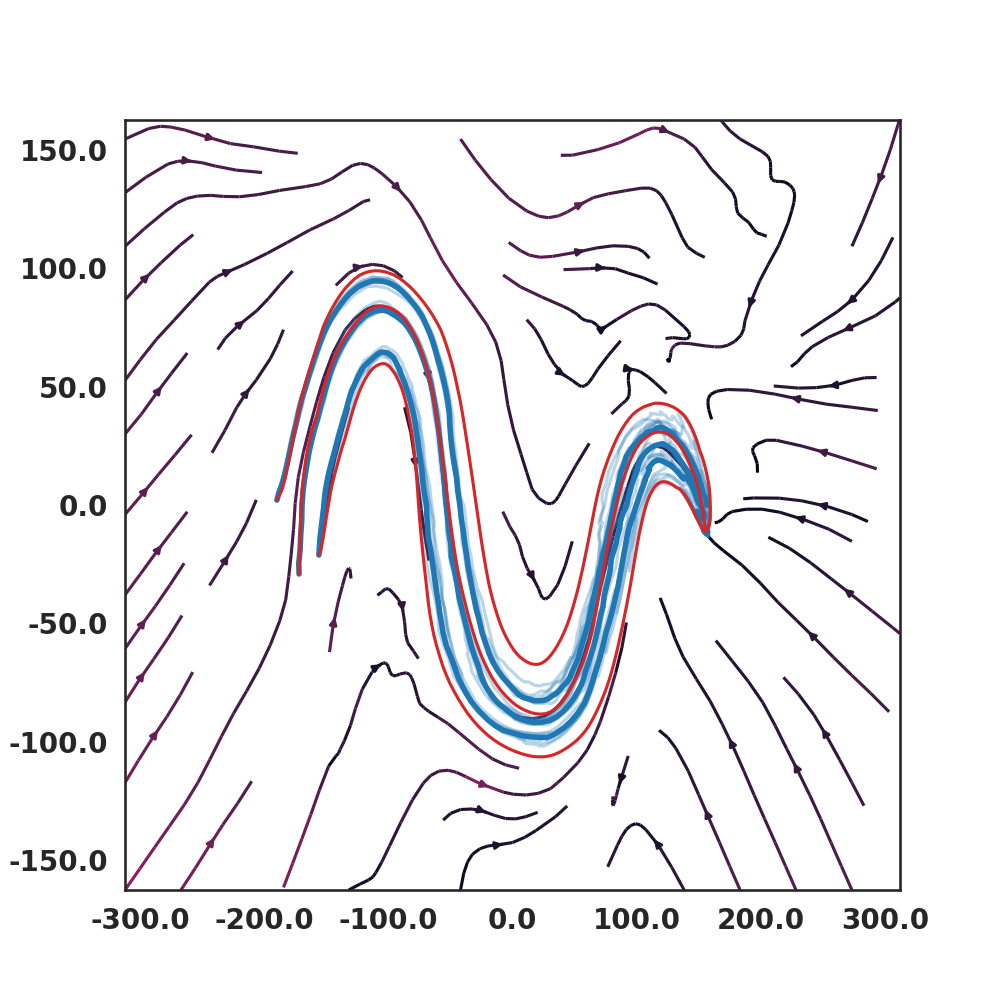}
    \hfill
        \includegraphics[width=.22\textwidth]{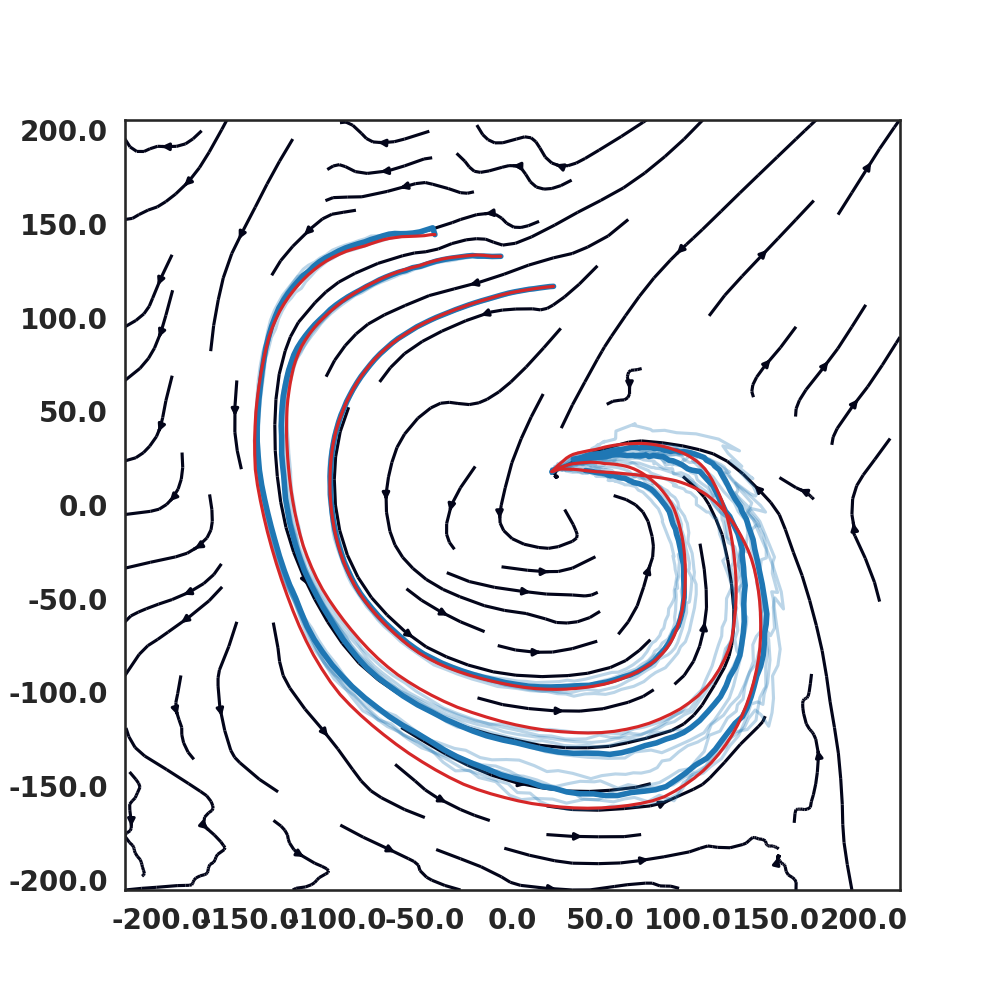}
    \caption{Examples of learned dynamics in LASA dataset. In dark blue, the expected trajectory, in light blue, noisy trajectories, in red, given trajectories}
    \label{fig:generated}
\end{figure}

\subsection{Limit Cycle behaviours}\label{sec:limit_toy}

We can model a complex limit cycle in the observation space by switching the latent linear dynamic model with a simple limit cycle behavior.
To prove this, we have recorded two-dimensional handwritten data, representing different letters. The recorded demonstrations maintain similar shape, orientation, and velocity in the drawings. 

Similarly to the previous section, proper initialization of the latent dynamics increases the performance of our model. We apply \gls{pca} over the trajectories and we extract the main frequency of the cycle through Fourier transform. This frequency is set as the initial angular velocity. 
For the nonlinear emission, we considered a concatenation of ten Coupling layers with ten orthogonal transformations. The dynamics generated by \textit{ImitationFlows} are shown in Fig.~\ref{fig:iros}. The model can perfectly represent the complex demonstrated dynamics.

\begin{figure*}[t]
    \centering
    \includegraphics[width=.24\textwidth]{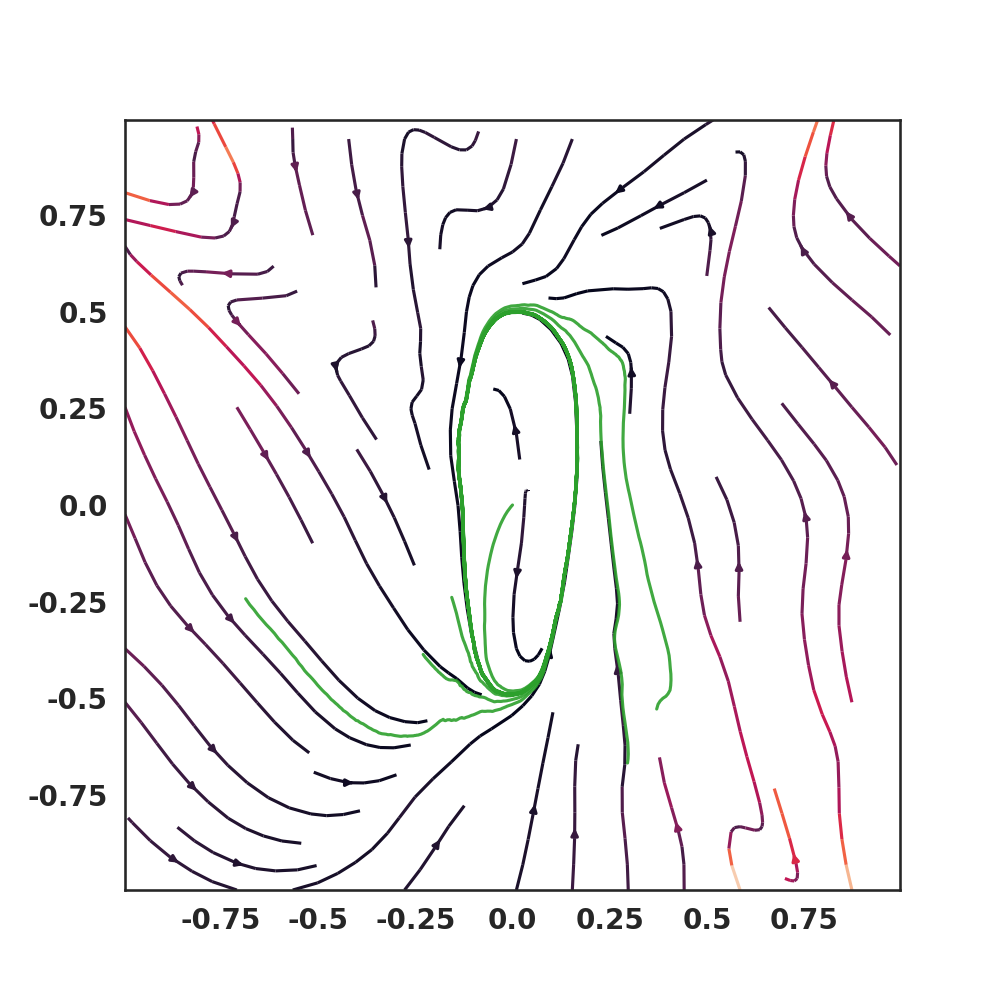}
    \hfill
        \includegraphics[width=.24\textwidth]{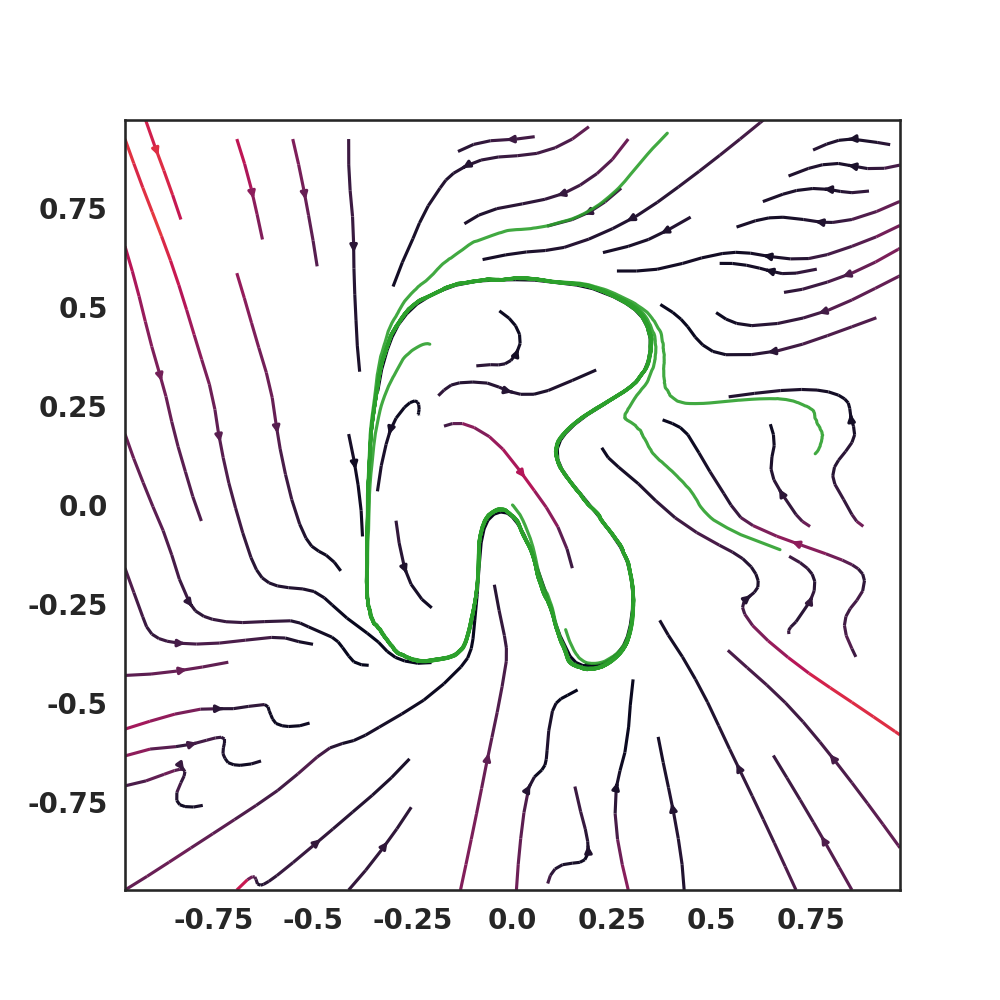}
    \hfill
        \includegraphics[width=.24\textwidth]{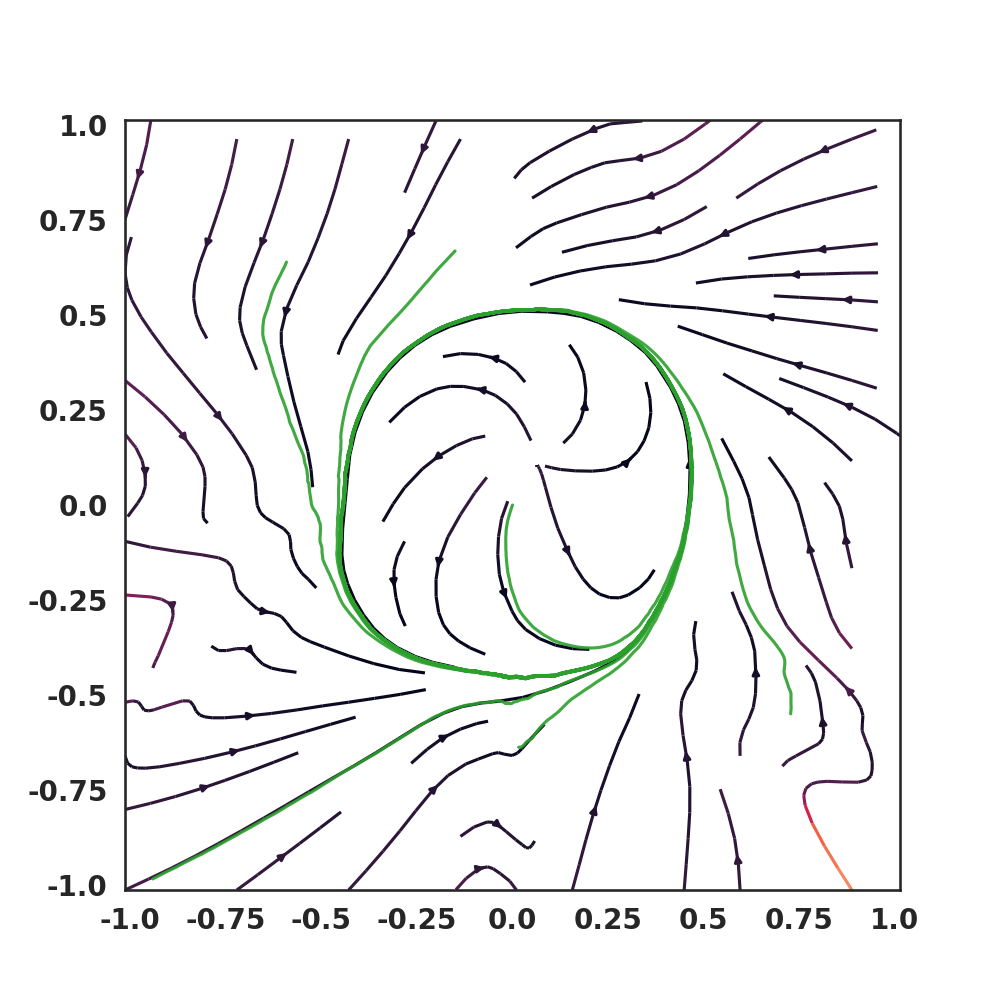}
    \hfill
        \includegraphics[width=.24\textwidth]{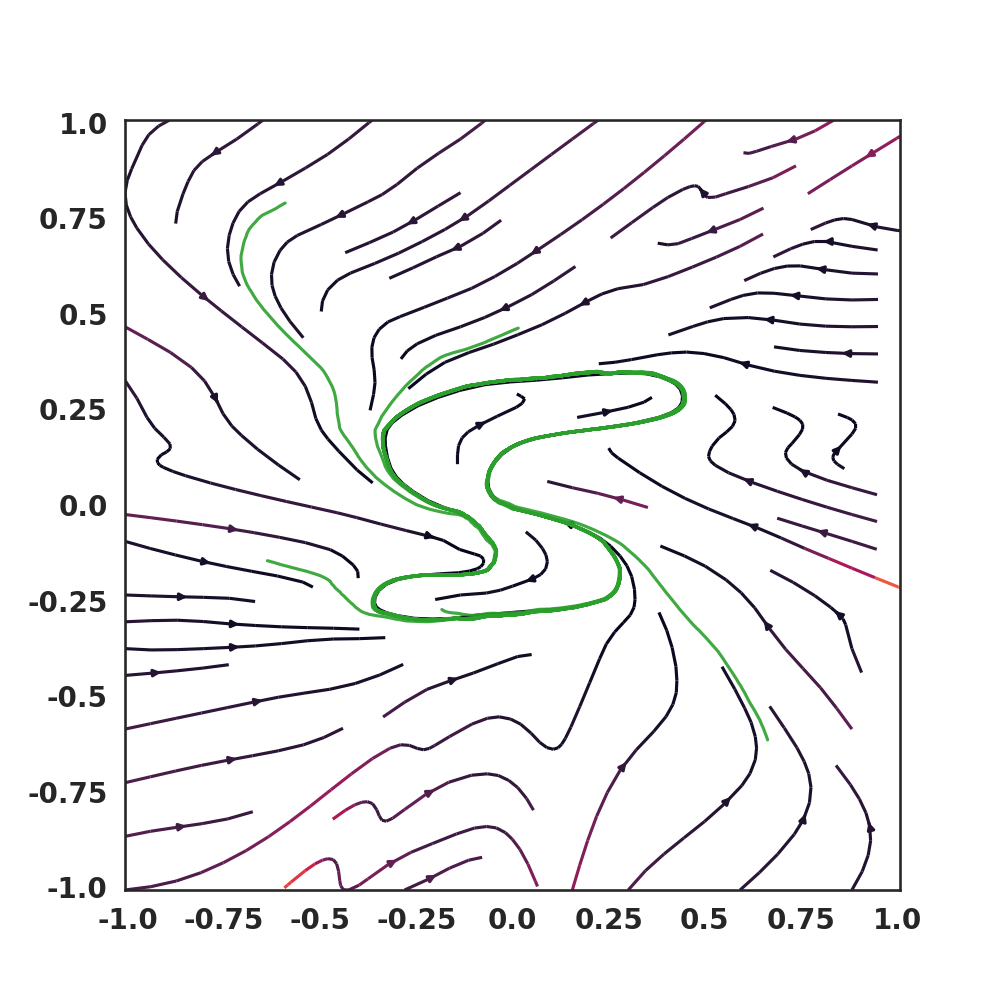}
    \caption{Examples of learned limit cycles. The color in the vector fields represent the magnitude of the velocity, the brighter the faster. In green, trajectories generated by Imitation Flows.}
    \label{fig:iros}
\end{figure*}

\subsection{Classification for IROS letters}
Normalizing Flows compute the exact distribution of the data. This property distinguishes them from other density approximators~\cite{kingma2013auto} that lack the normalization term. Due to this fact, we can compare the obtained probabilities and use them for classification. 

We show this by applying the previously learned letter models for the limit cycles as classification modules.
A simple classifier is built by computing the probability of the given trajectory and  selecting the class with the highest probability
\begin{align}
    k^* = \arg \max_{k \in \mathcal{K}} p(\tau|k).
\end{align}
We have recorded ten new handwritten trajectories for each class, and we use them for testing.
As shown in Fig.~\ref{fig:class}, the classifier gets a high rate of correct classification for all the shapes, with a small prediction error for R and S. The classifier is not considering any scaling or rotating term, which could improve our results. Failure mostly occurs when there is a scale or velocity discrepancy between training and testing trajectories.

\begin{figure*}[t]
\centering
    \begin{minipage}{.45\textwidth}
        \centering
        \includegraphics[height=.8\textwidth]{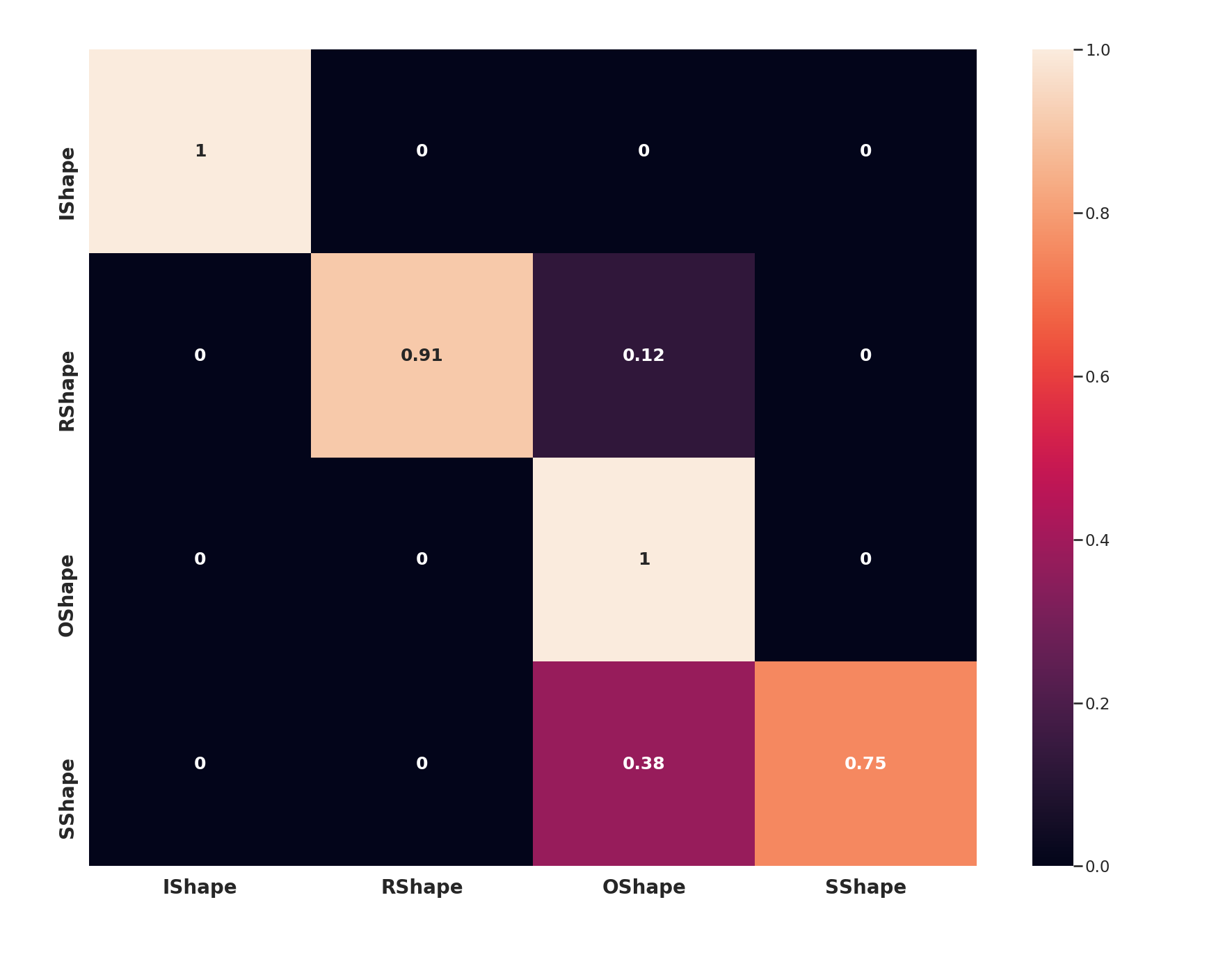}
        \caption{Confussion Matrix for I,R,O,S letters. The values in each cell, represent the probability of a test trajectory(vertical) to be classified as class(horizontal)}
        \label{fig:class}
    \end{minipage}
    \hfill
    \begin{minipage}{.45\textwidth}
        \centering
        \includegraphics[height=.8\textwidth]{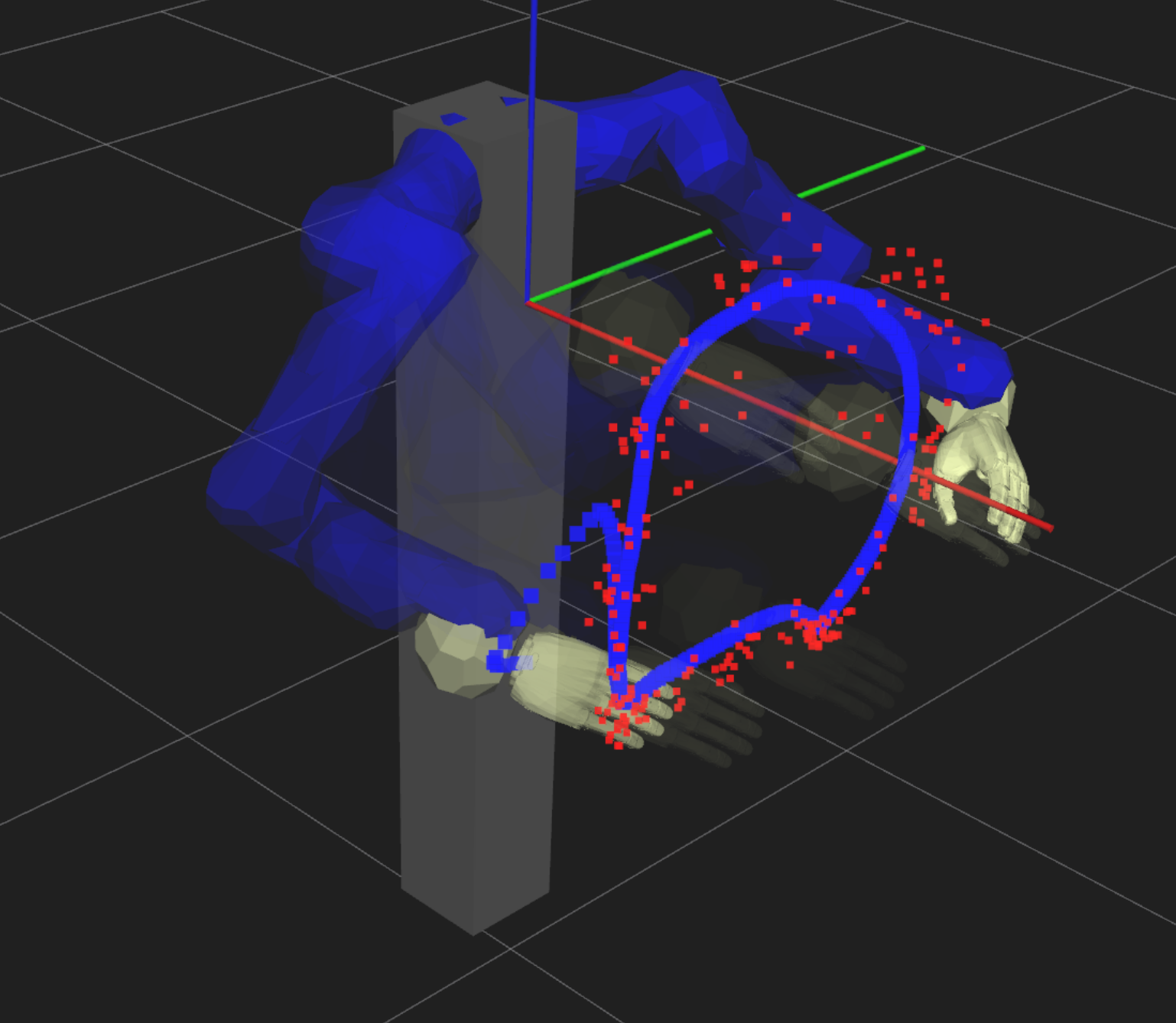}
        \caption{In Blue, the generated trajectory by \textit{ImitationFlows}. In red, samples from given demonstrations.}
        \label{fig:3d}
    \end{minipage}
\end{figure*}

\subsection{Learning drums playing on a robot}\label{sec:robot}

We study the applicability of \textit{ImitationFlow} in a real robotics application, with higher dimensionality(Cartesian Position with fixed orientation). We have recorded trajectories while doing kinesthetic teaching with a DLR-KUKA lightweight robot in drums playing task, presented in Fig.~\ref{fig:real_task}.

To deal with a higher dimensional problem, we switch the Coupling layer~\cite{dinh2016density} with a \gls{maf}~\cite{papamakarios2017masked}. The orthogonal transformation remains on each output of the \gls{maf} layer.
We initialize the model in a similar way to the previous section. We apply a Fourier transform in the first dimension of \gls{pca} space and the main frequency is set as the initial angular velocity. 

We show in Fig.~\ref{fig:3d}, a generated trajectory compared with the demonstrated data in cartesian space. We consider a starting configuration that is far from the limit cycle and generate a full trajectory offline. We show that the model can generate a stable trajectory towards the attractor.

\section{RELATED WORK}\label{RW}

Learning (globally) stable dynamical systems is a crucial topic in the \gls{lfd} community, as these kinds of systems provide motion generation controllers that are robust to perturbations and can generalize the motion in regions of the state space where no demonstration is available.
Previous works on this topic can be divided into two different categories.

The first set of approaches is based on Lyapunov stability analysis.
The common idea behind these methods is to find a candidate function from a family of Lyapunov functions.
One of the first examples of these approaches is the \gls{seds} algorithm~
\cite{khansari2011learning}, that can learn globally stable dynamics towards a goal position. The candidate function is selected from the family of quadratic Lyapunov functions, while the dynamical system is learned using a finite mixture of Gaussian functions. The optimal solution is obtained by maximizing the log-likelihood of the demonstration under the stability constraint using \gls{sqp}.
The main limitation of \gls{seds} is that the dynamics are restricted to contractive ones due to the limitation imposed by the quadratic Lyapunov function family. To overcome this issue, and improve the class of representable movements, the \gls{clfdm} approach~\cite{khansari2014learning} has been proposed. To do so, the authors propose a richer class of Lyapunov functions, the \gls{wsaqf}. The method uses a \gls{clf} to find stable dynamics. Dynamics are modeled using an unstable dynamical system that is stabilized using an additional control signal. 
While \gls{clfdm} is more expressive than the \gls{seds} algorithm, the learning process can be problematic~\cite{neumann2015learning}. 
An alternative solution to the lack of representation power of \gls{seds} is the $\tau-$\gls{seds} algorithm~\cite{neumann2015learning}.
Differently from the \gls{clfdm} approach, this algorithm adds a diffeomorphic transformation on top of the stable dynamics learned by \gls{seds}. However, an application-dependent diffeomorphic transformation must be provided by the designer. Similar in spirit to our work is \cite{perrin2016fast}, which also used a diffeomorphic transformation to inherit the stability properties. While in their case, a kernel-based local translation is used, in our case, we use invertible networks as bijective transformation. In \cite{umlauft2017learning}, the learned stable dynamics were also stochastic. Anyway, the distribution family of the model was imposed and not learned from the data.
The Lyapunov stability principle has been studied also in combination with neural networks.
In~\cite{neumann2013neural}, the authors learn stable dynamics and the Lyapunov candidate from data using a neural network. While they can find accurate results, stability is not guaranteed globally. 

More recently in \cite{Kolter2019Learning} also deep models have been proposed for learning stable dynamic systems. In their paper, a deep neural network is proposed that learns the Lyapunov candidate and the system dynamics jointly by using an \gls{icnn}.

Recently, another set of approaches has been developed, building on the contraction analysis to find a globally stable solution.
In \cite{ravichandar2017learning}, the authors propose a method that uses the same dynamic model as \gls{seds}, but they exploit contraction analysis, instead of Lyapunov functions, to enforce global asymptotical stability.
In \cite{sindhwani2018learning}, a contractive vector field is learned by \gls{rkhs}. This approach can learn a set of equilibrium points and constraint the local curvature using convex optimization. 

There have been a few attempts of extending the Normalizing flows to time series~\cite{Kumar2020VideoFlow, razaghifiltering}. While these approaches benefit from the flexibility of normalizing flows and present remarkable results, they do not have any theoretical guarantees on the stability of the learned dynamical system.

To the best of our knowledge, our method is the only approach that has global stability guarantees for any distribution \gls{sde}, while being able to learn more complex attractors, such as limit cycles.

\section{Discussion and Future Work}\label{conclusions}
In this paper, we presented a Deep Generative Model that extends the Normalizing Flows for learning stable \gls{sde}s. We have demonstrated the Lyapunov stability for a class of \gls{sde}s and propose a learning algorithm to learn them from a set of demonstrated trajectories. 

We show that our model outperforms state-of-the-art algorithms for stable dynamics learning. Moreover, we have shown that our model is not limited to point-to-point dynamics, and we show how it can be extended to complex limit cycle behaviors by plugging a different latent dynamic. The density estimation property of our model has been applied in a classification task, and we show that our model can be extended to higher dimensions to solve a real robot task.

In future works, we will include conditioning to our model, to adapt the dynamics to different environments. Furthermore, we will improve the generalization capabilities of our model in areas without demonstration, to obtain smoother trajectories towards the stable attractor.





\bibliographystyle{ieeetr}
\bibliography{bibliography}

\end{document}